\documentclass{article}
\usepackage{iclr2027_conference,times}
\usepackage[T1]{fontenc}
\usepackage[utf8]{inputenc}
\usepackage{amsmath,amssymb,amsthm}
\usepackage{graphicx}
\usepackage{booktabs}
\usepackage[table]{xcolor}
\usepackage{hyperref}
\usepackage{url}

\hypersetup{colorlinks=true,linkcolor=blue,citecolor=blue,urlcolor=blue,
  pdftitle={Speculative Sampling in Interactive Video Generation},
  pdfauthor={Yubo Huang, Yuxuan Zhang, Jiaming Liu, Sirui Zhao, Enhong Chen}}
\title{Speculative Sampling in Interactive Video Generation}
\author{%
  Yubo Huang\raisebox{0.55ex}{\scriptsize 1}\quad
  Yuxuan Zhang\raisebox{0.55ex}{\scriptsize 2}\quad
  Jiaming Liu\raisebox{0.55ex}{\scriptsize 3}\quad
  Sirui Zhao\raisebox{0.55ex}{\scriptsize 1}\quad
  Enhong Chen\raisebox{0.55ex}{\scriptsize 1}\thanks{\href{mailto:cheneh@ustc.edu.cn}{cheneh@ustc.edu.cn}}\\[6pt]
  \normalsize\raisebox{0.45ex}{\scriptsize 1}University of Science and Technology of China\\
  \normalsize\raisebox{0.45ex}{\scriptsize 2}The Chinese University of Hong Kong\\
  \normalsize\raisebox{0.45ex}{\scriptsize 3}Independent Researcher
}

\iclrfinalcopy

\makeatletter
\renewcommand{\@maketitle}{%
  \vbox{%
    \hsize\textwidth
    \centering
    {\LARGE\@title\par}
    \vskip 0.22in
    {\large\@author\par}
    \vskip 0.24in
  }%
}
\makeatother

\newcommand{\method}{Speculative Sampling}
\newcommand{\E}{\mathbb{E}}

\definecolor{tableheader}{gray}{0.93}
\definecolor{tablerule}{gray}{0.28}
\definecolor{tablestepone}{RGB}{241,247,252}
\definecolor{tablesteptwo}{RGB}{247,247,247}
\definecolor{tablestepfour}{RGB}{252,247,239}

\newcommand{\committablestyle}{%
  \footnotesize
  \renewcommand{\arraystretch}{0.96}%
  \setlength{\tabcolsep}{2.1pt}%
  \setlength{\aboverulesep}{0.28ex}%
  \setlength{\belowrulesep}{0.28ex}%
  \arrayrulecolor{tablerule}%
}
\newtheorem{proposition}{Proposition}

\begin{document}
\maketitle
\lhead{}
\begin{abstract}
Interactive video generation commonly relies on small bidirectional chunks to
reduce response latency. Yet our preliminary experiments show that larger
chunks improve video quality and better utilize GPU parallelism, allowing more
latents to be sampled with only a modest increase in sampling time. Motivated
by these observations, we propose Speculative Sampling, a strategy that
combines the modeling power of large bidirectional windows with the low-latency
interaction of small ones. It expands the modeling window by appending noise
latents as padding and caches the resulting predictions for reuse when the
condition remains unchanged. Predictions are delivered incrementally, while
the uncommitted future remains replaceable. Our latency analysis shows that
this removes the waiting-time lower bound imposed by chunk boundaries, shifting
the optimization focus from reducing chunk size to accelerating per-chunk
sampling. Bidirectional window size thus becomes a decision variable for
jointly optimizing visual quality and sampling time, rather than a necessary
constraint on low-latency interaction. This opens a path to serving large
bidirectional models in interactive streaming systems without complex
streaming post-training. Project page:
\url{https://yubo-shankui.github.io/speculative_sampling}.
\end{abstract}

\section{Introduction}
\label{sec:intro}

Interactive video generation requires newly arriving conditions to influence
ongoing playback promptly. A viewer may change a camera command or action
instruction at any time, while existing video continues to play.
Generation throughput alone does not capture this requirement: a fast model
can still leave the viewer waiting for a generation or playback boundary.

A natural route is fine-grained autoregressive generation. Advances in
rollout training and few-step distillation have made this route increasingly
practical \citep{huang2025self,zhu2026causalpp,yang2025longlive}.
Smaller generation chunks provide more frequent opportunities to incorporate
new conditions, enabling fine-grained interaction, but may limit joint
temporal modeling and parallel efficiency.
This raises our central question: \emph{must low-latency interaction come at
the cost of a smaller bidirectional modeling window?}

We introduce Speculative Sampling, a strategy that combines the modeling power
of large bidirectional windows with the low-latency interaction of small ones.
It expands the modeling window with padding latents and caches their predictions
for reuse when the condition remains unchanged, while allowing the predicted
future to be revised when new conditions arrive (Figure~\ref{fig:concept}).

\begin{figure}[t]
  \centering
  \begin{minipage}[c][53mm][c]{\linewidth}\centering
    \includegraphics[width=\linewidth,height=53mm,keepaspectratio]{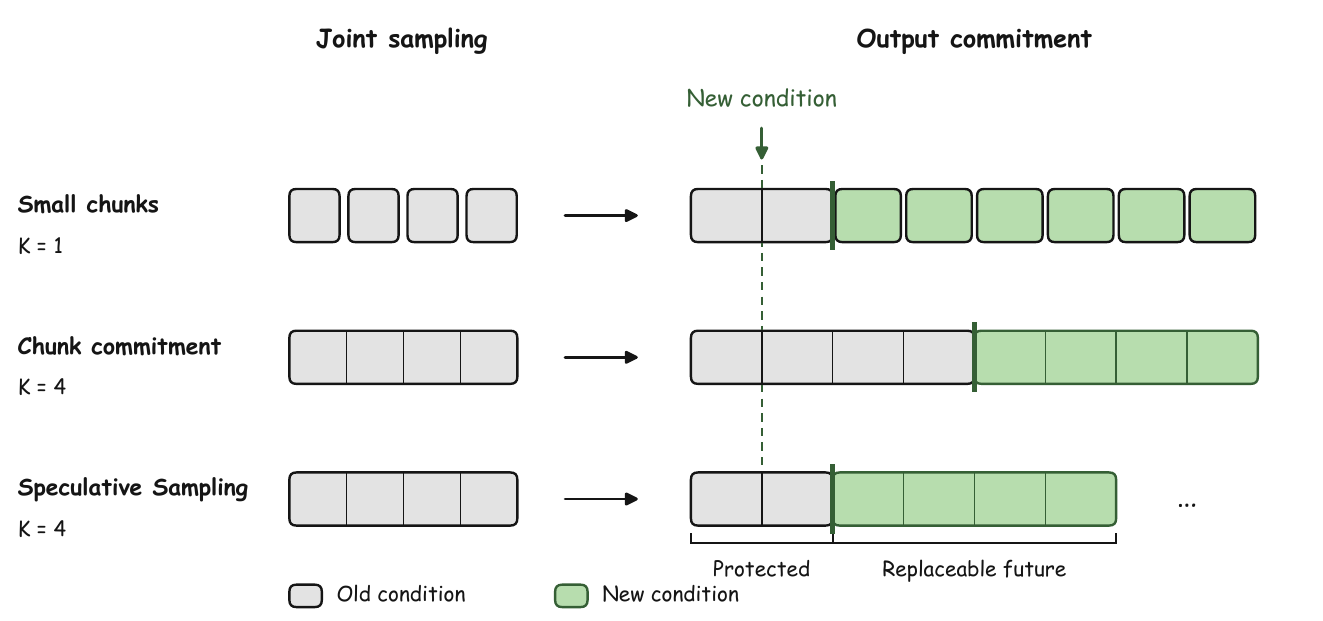}
  \end{minipage}
  \caption{Fine-grained interaction without shrinking generation chunks.
  Our method preserves large-chunk joint sampling while allowing updates
  beyond the protected prefix, rather than only at chunk boundaries.
  Gray/green denote old/new conditions; the dashed line marks playback
  position at condition arrival, and thick green edges mark updated output.
  Positions represent video content, not computation time.}
  \label{fig:concept}
\end{figure}

We first formulate a serving model that separates condition arrivals,
generation, output commitment, and playback. Generation is the computation
needed for deliverable video output; for the latent diffusion models studied
here, it includes DiT sampling and required VAE decoding. The model distinguishes
generation time from phase-dependent \emph{random-arrival waiting} imposed
by scheduling and playback boundaries, and identifies replaceable outputs.
When new output can enter playback only at chunk boundaries, the chunk
duration imposes a waiting floor even if generation is fast.

We next examine chunk size $K$, the number of latent frames jointly
sampled in one DiT sampling call. Changing $K$ affects joint temporal modeling
and parallel efficiency, as well as the time needed to generate each chunk.
Our chunk-size ablation shows higher aggregate video
quality for larger-chunk configurations than their frame-wise counterparts
in the evaluated settings (Section~\ref{sec:experiments}). Larger chunks also
better utilize GPU parallelism, allowing more latents to be sampled with only
a modest increase in sampling time. Together with the
system model, this motivates retaining the benefits of joint generation
while reducing the waiting imposed by coarse output commitment.

Guided by the system model and chunk-size ablation, we develop our method to
combine large-chunk joint generation with fine-grained delivery.
We expand the bidirectional modeling window by appending noise latents as
padding. Rather than discarding the resulting predictions, we cache them for
reuse when the condition remains unchanged, while keeping the uncommitted
future replaceable.
We formulate the controller as a two-state finite-state machine that
coordinates joint generation and incremental consumption of cached
predictions, while playback continues independently.
This decouples DiT sampling calls from individual deliveries: completed
predictions are reused between condition updates, avoiding repeated
sampling merely to support finer-grained delivery.

The resulting design points to an alternative route to interactive video
generation: shrinking chunks need not be a prerequisite for low interaction
latency. Instead, the optimization focus can shift to accelerating
high-quality joint generation.
DiT sampling acceleration and parallel acceleration can support this route.
We further demonstrate this route by serving a large bidirectional model in a
low-latency interactive streaming system without Self-Forcing-style
post-training.

Our contributions are fourfold:
\begin{enumerate}
  \item \textbf{A simple serving-system model for interactive video generation.}
  We formalize condition arrivals, generation, output commitment, and
  playback, distinguishing generation time from the additional waiting
  imposed by scheduling and playback boundaries
  (Section~\ref{sec:problem}).
  \item \textbf{A systematic analysis of chunk size.}
  We study how chunk size affects video quality, generation
  time, and redundant computation, clarifying the trade-offs between joint
  modeling and responsiveness (Section~\ref{sec:experiments};
  Appendices~\ref{app:latency}--\ref{app:cost}).
  \item \textbf{Recomputation-efficient speculative sampling.}
  Guided by this model and analysis, we propose a speculative sampling
  scheme formulated as a two-state finite-state machine.
  It combines large-chunk joint generation with fine-grained output
  commitment, reusing predictions and replanning only
  a feasible suffix when conditions change (Section~\ref{sec:method}).
  \item \textbf{An alternative route to interactive video generation.}
  By separating chunk size from output commitment, we shift
  the optimization focus from shrinking chunks to accelerating high-quality
  joint generation, making per-chunk generation time a central target
  for improving responsiveness.
\end{enumerate}

\section{Related work}
\label{sec:related}

\paragraph{Autoregressive video generation.}
Autoregressive video generation exploits temporal structure by predicting new
content conditioned on the past \citep{yan2021videogpt}. This provides a
natural interface for extending videos, including segment-wise continuation
under changing prompts \citep{villegas2022phenaki}. Bringing sequential
prediction to diffusion models \citep{chen2024diffusionforcing} leaves two
practical challenges: the cost of repeated sampling and errors accumulated
through generated history. Recent progress therefore targets efficient
few-step causal generation \citep{yin2024causvid,zhu2026causalpp} and training
aligned with self-generated, long-horizon rollouts
\citep{huang2025self,yang2025longlive}. Related rolling and bidirectional-window
approaches retain joint temporal processing during sequential output
\citep{kim2024fifo,kodaira2025streamdit,liu2025rolling,biwm2026}.
NFD additionally describes speculative multi-frame sampling on a frame-causal
backbone to improve throughput, but retains a causal mask within the enlarged
window and therefore does not exploit bidirectional modeling across it
\citep{cheng2025nfd}. Related control mechanisms
include reuse or replanning of fully denoised futures with ground-truth
observation feedback \citep{eqf2026} and management of generation--playback
lead \citep{catnip2026mainecoon}. Our method studies a complementary interface:
retaining temporally bidirectional joint generation within each chunk while committing
outputs at a finer granularity under newly arriving conditions.

\paragraph{Accelerating video generation.}
Acceleration targets several components of generation cost. Few-step
distillation \citep{yin2024causvid,zhu2026causalpp} and adversarial post-training
\citep{lin2025adversarial} reduce DiT sampling steps. Pyramid Flow performs
much of its DiT sampling below full spatial resolution and compresses history
through a temporal pyramid \citep{jin2024pyramid}; FramePack reduces the
historical-context token budget \citep{zhang2025framepack}.
FasterCache and TeaCache reuse intermediate computations across diffusion
steps \citep{fastercache2024,teacache2024}, whereas Sliding Tile Attention
and Sparse VideoGen reduce attention work through local or spatial--temporal
sparsity \citep{sta2025,sparsevideogen2025}. USP addresses distributed attention
through sequence parallelism \citep{fang2024usp}. These methods have distinct
training and inference requirements. Our perspective shifts the emphasis
from shrinking generation chunks to accelerating their generation. By
decoupling generation from output commitment, our method highlights per-chunk
generation time, rather than chunk size alone, as a key determinant of
interaction latency. This places greater importance on accelerating joint
generation instead of requiring increasingly fine-grained autoregressive
models.

\section{Method}
\label{sec:method}

\subsection{Serving-System Model}
\label{sec:problem}

An interactive video serving system generates video into a buffer while
the player consumes previously generated content. New conditions arrive
online and must influence subsequent playback promptly.
For the latent diffusion models considered here, the \emph{chunk size}
$K$ counts the latents jointly sampled with \textbf{bidirectional}
temporal modeling.\footnote{Throughout this paper, our chunk-based
generation setting uses bidirectional temporal modeling within each
chunk and autoregressive continuation across chunks, conditioned on
retained past content.}
Let $f$ denote the video's temporal sampling frame rate, used for generation
and playback at its intended speed, and $r$ the pixel frames represented
by each latent. A latent spans $\delta=r/f$ seconds, so a chunk covers
$T=K\delta=Kr/f$ seconds.

\begin{figure}[htbp]
  \centering
  \begin{minipage}{0.9\linewidth}
  \begin{minipage}[c][45.9mm][c]{\linewidth}\centering
    \includegraphics[width=\linewidth,height=45.9mm,keepaspectratio]{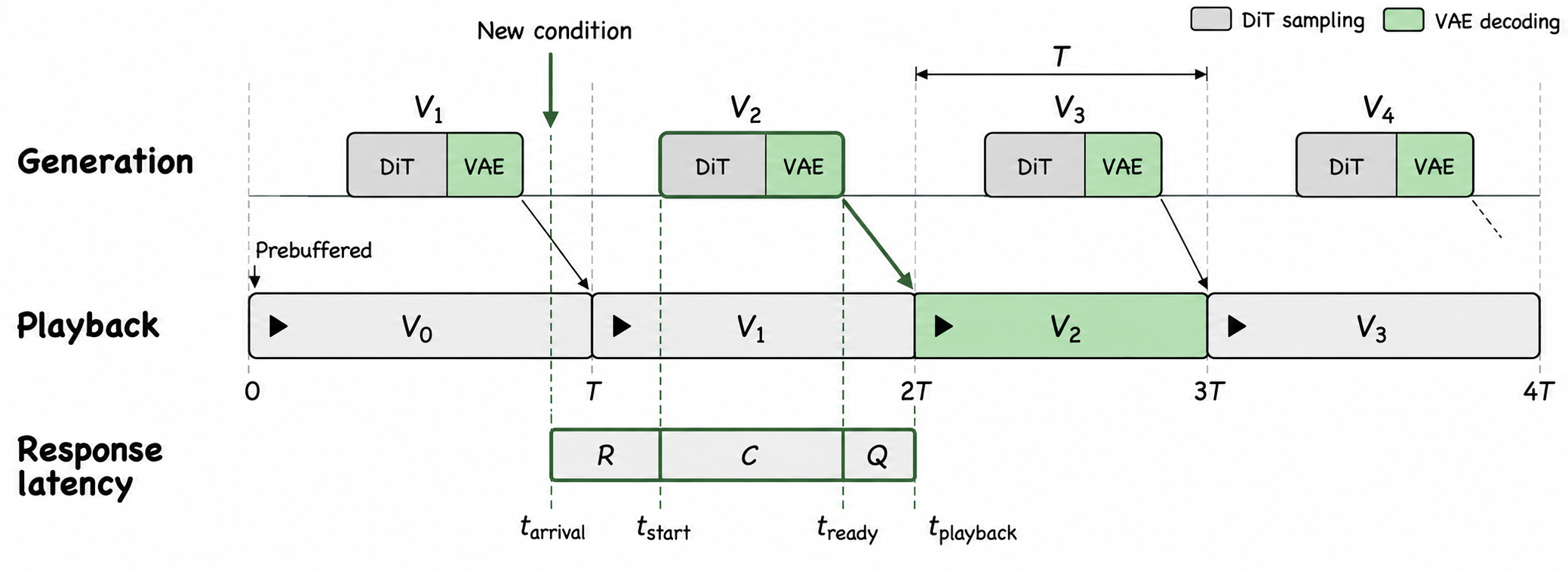}
  \end{minipage}
  \end{minipage}
  \caption{Generation speed alone does not determine response latency.
  In this periodic reference schedule, the highlighted update waits $R$
  for generation, takes $C$ to produce deliverable output, and waits $Q$
  for playback. Each chunk is prepared for the next playback period;
  $V_0$ is prebuffered.}
  \label{fig:serving-queues}
\end{figure}

Figure~\ref{fig:serving-queues} illustrates why response latency is not
just DiT sampling time. A condition arrives at $t_{\mathrm{arrival}}$,
waits until generation starts at $t_{\mathrm{start}}$, and yields a first
output group ready at $t_{\mathrm{ready}}$ and displayed at
$t_{\mathrm{playback}}$.
For a displayed update, these stages give
\begin{equation}
 \begin{aligned}
 R&=t_{\mathrm{start}}-t_{\mathrm{arrival}},&
 C&=t_{\mathrm{ready}}-t_{\mathrm{start}},\\
 Q&=t_{\mathrm{playback}}-t_{\mathrm{ready}},&
 L&=R+C+Q.
 \end{aligned}
 \label{eq:latency}
\end{equation}
Here $R$ and $Q$ are the waits before generation and playback, respectively.
Generation time $C$ covers the critical path to the first deliverable
output, including the required DiT sampling and VAE decoding.

For a conventional interface that admits newly conditioned output only
at fixed chunk playback boundaries, the expected combined waiting satisfies
$\E[R+Q]\geq T/2$ under fixed generation time and isolated, uniformly
phased arrivals (Section~\ref{sec:analysis}).
Since $T=Kr/f$, two direct ways to reduce this waiting floor at fixed $r$
are \textbf{smaller chunks} and \textbf{higher temporal frame rates}.
Both approaches reduce latency by shortening the chunk period $T$.
We instead ask: \textbf{can we eliminate the chunk-period waiting floor
by decoupling joint generation from output commitment?}
This would remove the need to shrink chunks or increase the model's
temporal frame rate merely to obtain finer-grained interaction, shifting
the optimization focus to generation time rather than chunk duration.

\subsection{Speculative Sampling via a Finite-State Machine}
\label{sec:fsm}

Our design separates how much the model jointly predicts from how much
the system commits at each delivery. \emph{Delivery granularity} $q$ is
the number of latents whose pixel frames are transferred together;
our method uses $q=1$, with each regular group spanning $\delta$ seconds.
For each sampling call, we append $K-1$ noise latents as padding to the
next latent to be delivered, forming a $K$-latent bidirectional modeling
window. Rather than discarding the resulting padding predictions, we retain
them in a prediction cache for reuse while the condition remains unchanged.
\emph{Output commitment} makes content non-replaceable and may precede
actual playback. A large chunk supplies a reusable
future; the controller keeps that future replaceable until needed.
This is the role of \emph{speculative sampling}, distinct from the
individual DiT sampling calls it schedules.

\paragraph{Joint predictions and retained state.}
At a branch position $j$, let $\mathcal H_j$ be the conditioning state
derived from the retained prefix, and let $c$ denote the current condition.
One DiT sampling call produces a
$K$-latent prediction $\widetilde Y_j$:
\begin{equation}
 \widetilde Y_j\sim p_\theta(\,\cdot\mid\mathcal H_j,c\,),
 \qquad |\widetilde Y_j|=K.
 \label{eq:sample}
\end{equation}
The next latent and the padding latents are sampled jointly with bidirectional
temporal attention. Their predictions
are cached with source/playback mappings and condition versions, independently of the
state used to condition subsequent calls. For a chunk-wise AR model,
$\mathcal H_j$ includes prefix KV; Section~\ref{sec:bidirectional} defines
an alternative for bidirectional clip models. Completed latents require
the appropriate VAE decoding before pixel delivery.

\paragraph{Control states and events.}
The controller is an \emph{extended finite-state machine} with two
control states, $\mathcal S=\{\mathsf{Produce},\mathsf{Consume}\}$.
$\mathsf{Produce}$ means a generation job is active;
$\mathsf{Consume}$ means no job is active and predictions can be reused.
Playback continues in both states. Registers $x$ store the prediction
cache, conditioning and VAE state, condition versions, playback and
commitment positions, and the active job $J$.

The events are condition arrival, output/state readiness, job completion,
playback tick, and refill/retry wake-up. An event $e$ at time $t$ first
updates the registers to $\bar x=U_e(x)$. The dispatch guard $g(\bar x,t)$
requires an idle worker, pending-update or refill demand, and a feasible
branch boundary. The transition is
\begin{equation}
 (z^+,x^+)=
 \begin{cases}
  (\mathsf{Produce},\operatorname{Launch}(\bar x)),&g(\bar x,t),\\
  (\mathsf{Produce},\bar x),&\bar J\neq\varnothing,\\
  (\mathsf{Consume},\bar x),&\text{otherwise}.
 \end{cases}
 \label{eq:fsm}
\end{equation}
Dispatch requires $\bar J=\varnothing$, so the first two guards are
disjoint. Only $\operatorname{Launch}$ starts a DiT sampling call.

\paragraph{Guarded replacement and refill.}
A feasible boundary lies beyond the protected prefix, has the required
conditioning inputs, preserves a valid VAE continuation, and lets the old
prefix cover computation until the new content can meet its display
deadlines. On an update, $\operatorname{Launch}$ selects the earliest such
boundary, locks the prefix, and replaces only the eligible suffix with a
new complete chunk. It does not shorten the new sample to the old chunk's
remaining length. Without an update, it appends a chunk when remaining
playable coverage approaches the estimated compute lead, before exhaustion.
Infeasible requests remain pending without discarding useful predictions.

Arrivals during $\mathsf{Produce}$ update the pending condition but not
the active job. Completion validates its results and either returns to
$\mathsf{Consume}$ or immediately launches another eligible job.
Multiple pending updates coalesce to the latest version, with superseded
events recorded. Playback ticks consume ready groups in either state;
they do not themselves require DiT sampling. Output readiness may precede
job completion, so time $C$ is not necessarily the full
$\mathsf{Produce}$ residence time.
Figure~\ref{fig:scheduler} illustrates these actions;
Appendix~\ref{app:algorithm} specifies the guards and event updates.

\begin{figure}[t]
  \centering
  \begin{minipage}{0.9\linewidth}
  \begin{minipage}[c][90mm][c]{\linewidth}\centering
    \includegraphics[width=\linewidth,height=90mm,keepaspectratio]{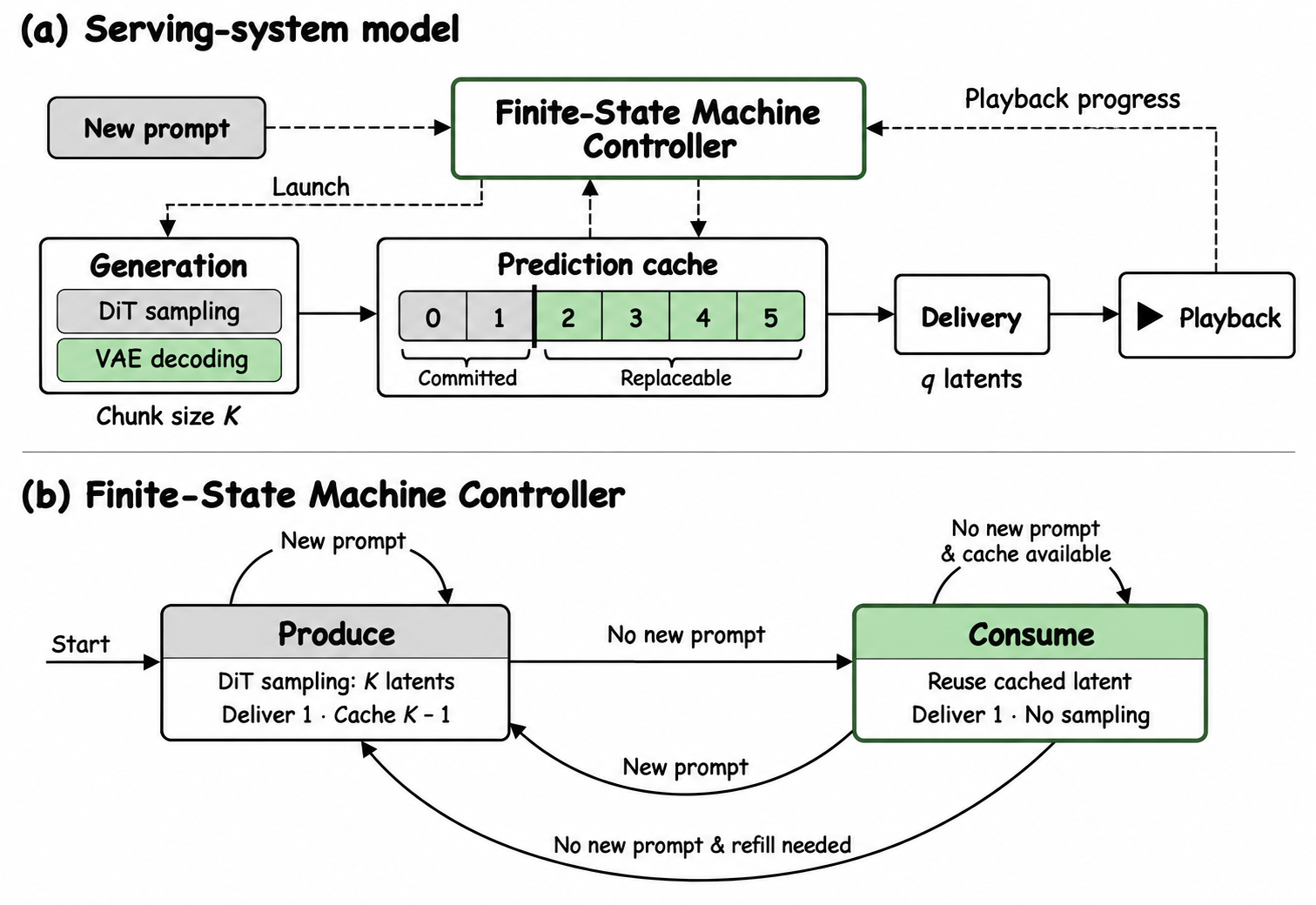}
  \end{minipage}
  \end{minipage}
  \caption{\method{} serving system and control.
  (a) Joint generation populates a prediction cache for fine-grained
  delivery and playback; playback progress guides refill.
  (b) The finite-state machine coordinates generation in Produce
  and prediction reuse in Consume, illustrated for $q=1$.
  Condition updates trigger feasible replanning without interrupting
  active generation; playback continues in both states.}
  \label{fig:scheduler}
\end{figure}

\subsection{Adapting Bidirectional Models for Streaming}
\label{sec:bidirectional}

Bidirectional teacher models often define the quality target for causal
streaming students, yet their full-sequence generation interfaces are not
designed for interactive serving. Directly converting the same backbone
to causal generation can also introduce a quality gap without careful
adaptation \citep{yin2024causvid}. This motivates a further extrapolation
of chunk size: \emph{can a large-chunk bidirectional model itself serve an
interactive video stream?} We preserve bidirectional joint sampling within
each call and adapt its conditioning and output interfaces, allowing
successive calls to continue from retained video while outputs are
committed incrementally.

\paragraph{Continuation interface.}
Bidirectional video generators commonly provide native image- or
video-conditioned continuation interfaces, allowing successive jointly
sampled chunks to be connected without modifying within-chunk attention.
A practical complication arises with
temporally compressive 3D VAEs: the continuation interface often expects a
temporally uncompressed first-frame latent, whereas an interior cached frame
belongs to a temporally compressed latent group. We therefore decode the
retained boundary to pixels and re-encode its final frame through the VAE's
first-frame path before the next DiT sampling call. This produces the latent
format expected by the pretrained continuation interface. Alternatively,
the bidirectional model can be fine-tuned to accept compressed latent
continuation directly.

\subsection{Latency and Recomputation Analysis}
\label{sec:analysis}

\paragraph{Removing coarse playback alignment.}
Consider uniform latent groups, fixed generation time $C$, and an isolated
update uniformly distributed over a chunk period $T=K\delta$.
If new output can enter playback only at chunk boundaries, then
\begin{equation}
 t_{\mathrm{playback}}^{\min}
 =T\left\lceil\frac{t_{\mathrm{arrival}}+C}{T}\right\rceil,
 \qquad \E[L]\geq C+\frac{T}{2}.
 \label{eq:chunk-bound}
\end{equation}
Since $L=R+C+Q$ and $C$ is fixed, this also gives $\E[R+Q]\geq T/2$.
It bounds the mean combined waiting, not each request or each component
separately; Appendix~\ref{app:latency} gives the derivation.
This boundary-imposed waiting persists whether it occurs before or after
generation. With a ready prefix, no additional buffering constraint,
and fine-grained commitment, the earliest latent-aligned display is
\begin{equation}
 \begin{aligned}
 t_{\mathrm{playback}}
 &=\delta\left\lceil\frac{t_{\mathrm{start}}+C}{\delta}\right\rceil,\\
 Q&=t_{\mathrm{playback}}-(t_{\mathrm{start}}+C),\qquad 0\leq Q<\delta.
 \end{aligned}
 \label{eq:switch}
\end{equation}
Thus alignment is tied to delivery granularity rather than chunk size.

\begin{figure}[htbp]
  \begin{minipage}[t]{0.53\linewidth}
  \vspace{0pt}
  \paragraph{Generation time and prefix waiting.}
  Fine commitment does not remove computation or its data dependencies.
  For one indivisible $C$-second job every $T$ seconds, with $C\leq T$,
  an isolated uniform arrival is blocked with probability $C/T$ and waits
  $C/2$ on average when blocked, provided that job supplies the required
  prefix. Hence
  \begin{equation}
   \E[R]=\frac{C^2}{2T},\qquad
   \E[L]=C+\frac{C^2}{2T}+\E[Q].
   \label{eq:residual}
  \end{equation}
  Reducing $C$ lowers both generation time and this conditional residual
wait (Figure~\ref{fig:latency}). For nonuniform completion phases,
$\E[Q]$ is determined by the actual pipeline timing
(Appendix~\ref{app:latency}).
  \end{minipage}\hfill
  \begin{minipage}[t]{0.43\linewidth}
  \vspace{0pt}
  \centering
  \begin{minipage}[c][43mm][c]{\linewidth}\centering
    \includegraphics[width=\linewidth,height=43mm,keepaspectratio]{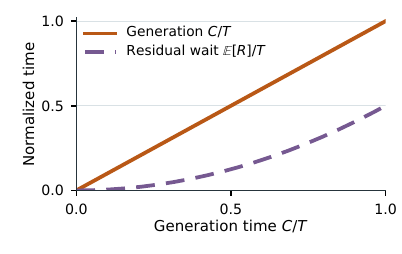}
  \end{minipage}
  \caption{At fixed $T$, direct generation scales as $C/T$, while residual
  wait scales as $\E[R]/T=(C/T)^2/2$ under the stated periodic model.}
  \label{fig:latency}
  \end{minipage}
\end{figure}

\paragraph{Cost of revocable predictions.}
Without condition changes, prediction reuse preserves the matched
chunk-wise baseline's complete DiT sampling-call count over steady-state
output. Normal refill remains necessary; finer delivery does not cause
resampling. Updates introduce extra work by replacing unused predictions.
For equal-cost calls producing $K$ usable latent groups each, let $w$
be the fraction ultimately discarded rather than played. Relative DiT
sampling work satisfies
\begin{equation}
 A\simeq\frac{1}{1-w}.
 \label{eq:waste}
\end{equation}
Infrequent updates can keep this overhead small, but update frequency
alone does not determine $w$: chunk size and the selected branch also
matter. VAE decoding, adapter preparation, overlap trimming, aborted
jobs, and cache reconstruction require separate cost accounting.
Appendix~\ref{app:cost} gives the assumptions and a discrete update model.
Together, these results motivate accelerating high-quality joint
generation rather than shrinking chunks solely for responsiveness.
DiT sampling acceleration and parallel acceleration target the generation
path; their benefit is measured through $C$ and total resource cost.

\section{Experiments}
\label{sec:experiments}

We first characterize how chunk size affects generation quality and
computational efficiency within and across GPUs. We then compare conventional
chunk commitment and \method{} in end-to-end interactive serving.

\subsection{Experimental Setup}
\label{sec:exp-setup}

\paragraph{Models and quality evaluation.}
For Wan1.3B, we follow CF++ \citep{zhu2026causalpp} and train on Toy6K,
$K\in\{1,3,7,21\}$ models with 1/2/4-step sampling. VBench
\citep{huang2024vbench} evaluates 81-frame, 16-fps videos at $832\times480$
and $1280\times720$ over 946 prompts and five seeds. For LTX-2.3
\citep{hacohen2026ltx2}, we train $K\in\{1,3,9,15,45\}$ models with three DiT
forwards under schedule $[999,757,522,0]$, where 0 is the terminal boundary,
on an internal 8K-video training set (Internal8K). We evaluate on an internal
32-case set (Internal32), with matched seeds and 361-frame outputs at
$832\times448$ and 24 fps. We report its six VBench visual scores separately.

\paragraph{Forward profiling.}
We profile Wan with USP over 1/2/4/8 H200 GPUs at batch one and BF16, using
shared frame-wise/4-step EMA weights and synthetic inputs with a 21-latent
prefix plus $K$ current latents. We report the slowest-rank median over 30
synchronized forwards after 10 warm-ups. Timings include forward communication
and output gathering but exclude text encoding, VAE decoding, and the sampler
loop; these are shape-level profiles rather than generation times $C$.

\paragraph{End-to-end serving protocol.}
For Wan1.3B, LTX-2.3, and Sol-H3
\citep{solh3_2026,li2026solvideoinferenceengine}, we compare model-specific
small/large chunks under conventional chunk-level commitment (Baseline) or
\method{}. Wan uses K3/USP4 and K21/USP8 on four/eight B300
DiT GPUs, with its real VAE pipelined on a separate GPU, at $832\times480$.
LTX uses K3/USP1 with eager or DiT CUDA Graph execution and K15/USP2 with eager
execution; all use separate text and media GPUs at $832\times448$ and 24 fps.
Because LTX short eager execution under-runs playback, we also report a CUDA
Graph variant; the long implementation does not support graph capture, so
chunk-size comparisons use eager execution. Sol-H3
uses 22- and 124-frame proposals with USP4/USP8 on four/eight B300 GPUs at $832\times480$
and 24 fps. Within each short/long setting, the baseline and our method share model placement,
continuation, decoding, playback, and scheduling lead time.

Each serving arm receives 30 matched 60-second traces at low (1--2 updates)
and high (3--11 updates) frequency. Jittered arrivals are independent of
playback and vary actions within a scene. The baseline applies updates at chunk
refills; our method may replace a cached suffix.

\paragraph{Serving metrics.}
$L_{50}/L_{95}$ measure arrival-to-display response after minimum-RTT clock
alignment. We report time-weighted NVML GPU kernel activity and the
fraction of the 60-second window spent in playback underflow. Wan activity
covers only its DiT GPUs because the VAE is separate; LTX activity averages
all three/four service GPUs, including its dedicated text and media GPUs.

\subsection{Chunk-Size Analysis Across Quality and Efficiency}
\label{sec:model-profiling}

\begin{figure}[!t]
  \centering
  \begin{minipage}[c][64mm][c]{\linewidth}\centering
    \includegraphics[width=\linewidth,height=64mm,keepaspectratio]{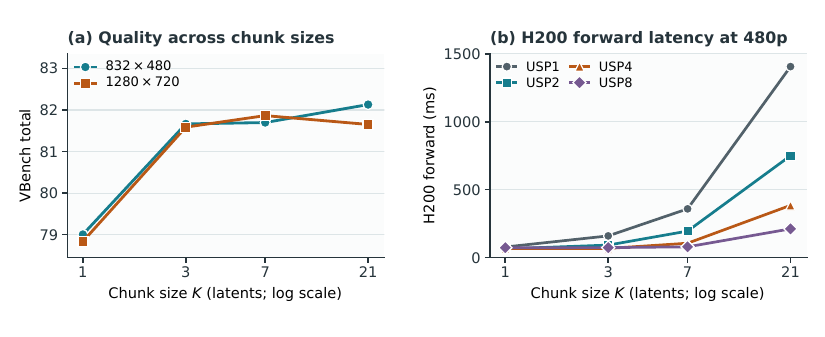}
  \end{minipage}
  \caption{Chunk size exposes a quality--efficiency tradeoff across models.
  (a) Wan four-step VBench totals at two resolutions. (b) Wan H200 forward
  latency across USP sizes at 480p. Quality uses separately trained
  configurations; timings use a shared reference checkpoint and synthetic
  fixed-length history. LTX quality and cached-forward profiles are reported in
  Table~\ref{tab:model-profile}.}
  \label{fig:tradeoff}
\end{figure}

\begin{table*}[t]
  \caption{Wan1.3B quality measurements. Total is the
  reported VBench aggregate, with its quality and semantic components;
  the table also includes dynamic degree, VisionReward, and instruction
  following.}
  \label{tab:wan-quality}
  \centering
  \committablestyle
  \begin{tabular}{llrrrrrrr}
\toprule
Resolution & Steps & $K$ & Total & Quality & Semantic & Dynamic & VisionReward & Instruction\\
\midrule
480p & 1 & 1 & 79.11 & 81.80 & 68.34 & 4.00 & 6.50 & 90.00\\
 & 1 & 3 & 80.38 & 83.40 & 68.29 & 58.00 & 4.81 & 94.00\\
 & 1 & 7 & 81.35 & 83.97 & 70.86 & 48.00 & 5.79 & 100.00\\
 & 1 & 21 & 81.68 & 84.83 & 69.08 & 85.00 & 9.93 & 94.00\\
\addlinespace[1pt]
 & 2 & 1 & 80.07 & 82.76 & 69.34 & 14.00 & 9.31 & 98.00\\
 & 2 & 3 & 81.62 & 84.63 & 69.57 & 57.00 & 9.82 & 100.00\\
 & 2 & 7 & 80.84 & 84.06 & 67.94 & 93.00 & 9.29 & 100.00\\
 & 2 & 21 & 81.84 & 84.62 & 70.74 & 71.00 & 10.17 & 100.00\\
\addlinespace[1pt]
 & 4 & 1 & 79.00 & 81.38 & 69.50 & 13.00 & 7.65 & 96.00\\
 & 4 & 3 & 81.66 & 84.70 & 69.51 & 80.00 & 9.64 & 100.00\\
 & 4 & 7 & 81.70 & 84.73 & 69.56 & 78.00 & 10.19 & 98.00\\
 & 4 & 21 & 82.13 & 85.03 & 70.54 & 85.00 & 9.66 & 98.00\\
\midrule
720p & 1 & 1 & 78.94 & 81.37 & 69.19 & 4.00 & 5.82 & 92.00\\
 & 1 & 3 & 79.99 & 82.70 & 69.18 & 47.00 & 3.93 & 98.00\\
 & 1 & 7 & 80.78 & 83.05 & 71.67 & 41.00 & 5.23 & 96.00\\
 & 1 & 21 & 80.49 & 83.10 & 70.04 & 67.00 & 9.56 & 100.00\\
\addlinespace[1pt]
 & 2 & 1 & 79.86 & 82.40 & 69.70 & 9.00 & 9.43 & 98.00\\
 & 2 & 3 & 81.52 & 84.11 & 71.18 & 48.00 & 10.10 & 100.00\\
 & 2 & 7 & 80.97 & 83.79 & 69.67 & 84.00 & 9.25 & 94.00\\
 & 2 & 21 & 80.26 & 82.61 & 70.84 & 39.00 & 9.91 & 98.00\\
\addlinespace[1pt]
 & 4 & 1 & 78.84 & 81.19 & 69.44 & 22.00 & 7.18 & 92.00\\
 & 4 & 3 & 81.59 & 84.36 & 70.49 & 74.00 & 9.59 & 100.00\\
 & 4 & 7 & 81.86 & 84.62 & 70.83 & 73.00 & 9.77 & 96.00\\
 & 4 & 21 & 81.65 & 84.43 & 70.52 & 66.00 & 9.36 & 96.00\\
\bottomrule
\end{tabular}

\end{table*}

\begin{table*}[t]
  \caption{H200 single-forward profiles and LTX quality. Wan timings use
  one shared reference checkpoint across nominal step configurations;
  LTX timings are cached-forward measurements for each $(K,\mathrm{USP})$.
  LTX uses three steps. SC/BC denote subject/background consistency;
  visual scores are multiplied by 100 without a combined total.}
  \label{tab:model-profile}
  \centering
  \committablestyle
  \newsavebox{\profileWanBox}
  \newsavebox{\profileLtxBox}
  \sbox{\profileWanBox}{%
    \setlength{\tabcolsep}{1.0pt}
    \begin{tabular}{@{}c@{}}
      \textbf{(a) Wan H200 forward (ms)}\\[1pt]
      \begin{tabular}{llrrrr}
\toprule
Resolution & $K$ & USP1 & USP2 & USP4 & USP8\\
\midrule
480p & 1 & 77.5 & 63.7 & 65.9 & 72.1\\
 & 3 & 159.2 & 91.5 & 67.0 & 72.9\\
 & 7 & 358.3 & 194.7 & 105.2 & 78.6\\
 & 21 & 1406.2 & 750.0 & 384.1 & 211.0\\
\addlinespace
720p & 1 & 218.7 & 143.4 & 95.8 & 72.7\\
 & 3 & 599.6 & 318.4 & 166.5 & 119.4\\
 & 7 & 1521.4 & 809.2 & 443.8 & 276.4\\
 & 21 & 6432.8 & 3335.5 & 1737.1 & 899.7\\
\bottomrule
\end{tabular}

    \end{tabular}%
  }
  \sbox{\profileLtxBox}{%
    \setlength{\tabcolsep}{1.65pt}
    \begin{tabular}{@{}c@{}}
      \textbf{(b) LTX-2.3 / Internal32, $832\times448$, 3-step}\\[1pt]
      \begin{tabular}{lrrrrrrrrrr}
\toprule
& \multicolumn{6}{c}{Internal32 visual scores $\times100$} & \multicolumn{4}{c}{H200 forward (ms)}\\
\cmidrule(lr){2-7}\cmidrule(lr){8-11}
$K$ & SC & BC & Aesth. & Imaging & Smooth. & Dynamic & USP1 & USP2 & USP4 & USP8\\
\midrule
1 & 87.97 & 94.68 & 52.82 & 73.09 & 99.62 & 31.25 & 274.0 & 440.5 & 479.6 & 526.1\\
3 & 88.81 & 94.31 & 52.21 & 65.98 & 99.53 & 46.88 & 255.6 & 423.5 & 457.0 & 511.2\\
9 & 89.42 & 93.27 & 50.72 & 68.96 & 99.56 & 34.38 & 426.7 & 417.8 & 456.6 & 502.0\\
15 & 84.96 & 92.21 & 51.01 & 65.98 & 99.50 & 78.12 & 660.3 & 420.4 & 461.0 & 509.4\\
45 & 88.72 & 93.68 & 52.40 & 68.75 & 99.64 & 40.62 & 2016.2 & 1158.4 & 634.0 & 507.9\\
\bottomrule
\end{tabular}

    \end{tabular}%
  }
  \resizebox{\textwidth}{!}{%
    \raisebox{\depth}{\resizebox*{!}{100pt}{\usebox{\profileWanBox}}}%
    \raisebox{\depth}{\resizebox*{!}{100pt}{\usebox{\profileLtxBox}}}%
  }
\end{table*}

\paragraph{Chunk size is a quality--efficiency design choice.}
Across Wan configurations, $K>1$ generally improves VBench over $K=1$
within the same step budget and resolution; for example, the four-step 480p
VBench total rises from 79.00 to 81.66 at $K=3$. The gains are not monotonic,
so the best $K$ depends on the setting. Meanwhile, increasing $K$ from 1 to
3 raises forward latency by only $2.05\times$ at 480p and $2.74\times$ at
720p, despite tripling the current-chunk latents. With USP8, 480p $K=7$
takes 78.6\,ms, close to 77.5\,ms for $K=1$ on USP1, while 720p $K=21$
falls from 6432.8 to 899.7\,ms. Suitable parallelism can therefore keep
larger chunks efficient, although more GPUs do not always help.

These results suggest that chunk size should not be minimized by default: it
is a design parameter that affects both quality and interactive granularity,
and the best operating point need not use the smallest chunk. We next test
whether a model-specific large chunk can remain responsive under interactive
serving.

\subsection{Real Serving-System Evaluation}
\label{sec:serving-evaluation}

\begin{table*}[t]
  \caption{Online serving results at 480p class for Wan1.3B, LTX-2.3, and Sol-H3.
  Baseline denotes chunk-level commitment; Speculative Sampling denotes our method.
  LTX's parenthesized CUDA Graph configurations apply graph capture and replay
  to the DiT; Long versus Short isolates chunk size among eager configurations,
  while parenthesized versus unparenthesized Short isolates this execution mode.
  $L_{50}/L_{95}$ are seconds; GPU
  kernel activity and stall ratio are percentages. GPU is the time-weighted
  mean over the monitored deployment GPUs, with model-specific scope described
  in the text.}
  \label{tab:main}
  \centering\footnotesize
  \renewcommand{\arraystretch}{0.88}
  \setlength{\tabcolsep}{1.9pt}
  \setlength{\aboverulesep}{0.25ex}
  \setlength{\belowrulesep}{0.25ex}
  \resizebox{\textwidth}{!}{\begin{tabular}{lllrrrrrrrr}
\toprule
Model & Resolution & Configuration & \multicolumn{4}{c}{Low-frequency updates} & \multicolumn{4}{c}{High-frequency updates}\\
\cmidrule(lr){4-7}\cmidrule(lr){8-11}
& & & $L_{50}$ & $L_{95}$ & GPU & Stall & $L_{50}$ & $L_{95}$ & GPU & Stall\\
\midrule
Wan1.3B & 480p & Short-Baseline & 1.13 & 1.47 & 49.67 & 0.000 & 1.24 & 1.65 & 52.19 & 0.000\\
 &  & Long-Baseline & 3.23 & 4.64 & 16.97 & 0.000 & 3.11 & 4.66 & 17.29 & 0.000\\
 &  & Short--Speculative Sampling & 1.01 & 1.14 & 50.45 & 0.000 & 1.05 & 1.35 & 52.25 & 0.000\\
 &  & Long--Speculative Sampling & 1.32 & 1.91 & 17.28 & 0.000 & 1.35 & 1.63 & 19.92 & 0.000\\
\midrule
LTX-2.3 & 480p & Short-Baseline & 2.12 & 2.58 & 22.02 & 19.669 & 2.01 & 2.64 & 22.69 & 20.339\\
 &  & Short-Baseline (CUDA Graph) & 1.81 & 2.32 & 29.19 & 0.005 & 1.89 & 2.34 & 30.30 & 0.021\\
 &  & Long-Baseline & 4.82 & 6.99 & 20.47 & 0.069 & 5.09 & 6.98 & 21.41 & 0.065\\
 &  & Short--Speculative Sampling & 2.00 & 2.67 & 21.94 & 19.758 & 2.00 & 2.57 & 22.76 & 19.878\\
 &  & Short--Speculative Sampling (CUDA Graph) & 2.01 & 2.82 & 29.03 & 1.781 & 1.94 & 2.54 & 29.99 & 4.407\\
 &  & Long--Speculative Sampling & 2.39 & 4.13 & 21.29 & 0.204 & 2.39 & 4.24 & 24.51 & 0.524\\
\midrule
Sol-H3 & 480p & Short-Baseline & 1.87 & 2.11 & 48.62 & 0.000 & 1.87 & 2.04 & 49.10 & 0.000\\
 &  & Long-Baseline & 4.49 & 6.44 & 17.42 & 0.000 & 4.45 & 6.59 & 18.25 & 0.000\\
 &  & Short--Speculative Sampling & 1.11 & 1.50 & 48.85 & 0.000 & 1.15 & 1.53 & 50.00 & 0.000\\
 &  & Long--Speculative Sampling & 1.42 & 2.47 & 18.45 & 0.000 & 1.41 & 1.95 & 22.67 & 0.000\\
\bottomrule
\end{tabular}
}
\end{table*}

\paragraph{Speculative Sampling removes the predicted boundary wait.}
Across the main short/long comparisons, our method reduces response latency in
a pattern consistent with the chunk-boundary model.  For Sol-H3, the measured
short/long mean reductions are 0.493/2.814\,s, close to the theoretical
0.438/2.562\,s.  The gain likewise grows with chunk size for Wan
(0.155 to 1.744\,s) and LTX eager
(0.014 to 2.116\,s), matching the predicted trend that larger chunks
leave more boundary wait to remove.

\paragraph{Large chunks remain responsive and expose serving headroom.}
Under our method, the long-setting $L_{50}$ is only 0.26--0.39\,s above the
corresponding short setting across all three models and both request loads.
Meanwhile, long configurations have much lower per-GPU activity for Wan
(17--20\% versus 50--52\%) and Sol-H3 (18--23\% versus 49--50\%), while LTX
eager remains comparable (21--25\% versus 22--23\%).  This idle time suggests
headroom for serving more concurrent streams.  Under high-frequency updates, the activity
increase over the baseline is
somewhat larger, especially for long settings, but long-setting activity
remains within 20--25\%.

\paragraph{A large chunk can outperform the short baseline.}
On Sol-H3, the long setting under our method achieves $L_{50}=1.42/1.41$\,s
under low/high update rates, below the short baseline's 1.87/1.87\,s, with no
playback stalls.  Thus, retaining a larger bidirectional modeling window need
not trade away responsiveness.

\section{Limitations and discussion}
\label{sec:limits}

The main limitations are twofold. First, our method benefits from predictions
only when they are reused before a new condition invalidates them; frequent or
continuous updates can therefore discard much of each sampled chunk and waste
substantial DiT computation. Second, with bidirectional within-chunk attention,
the retained KV may encode information from a subsequently discarded suffix,
so deleting its KV entries may not fully remove its influence on a regenerated
continuation.

\section{Conclusion}
\label{sec:conclusion}

We have formulated interactive video generation as a joint problem of
prediction and output commitment. Our method preserves multi-latent joint generation,
reuses completed predictions under unchanged conditions, and replans only
the future that can still be replaced while maintaining a feasible playback
prefix. This shifts part of the design effort from shrinking the generation
unit toward accelerating joint generation, managing revocable predictions,
and scheduling timely delivery.

\label{main-text-end}
\clearpage
\bibliography{references}
\bibliographystyle{iclr2027_conference}
\clearpage
\appendix
\section{Timing assumptions and derivations}
\label{app:latency}

\subsection{Fixed chunk boundaries}

The reference model has playback boundaries at integer multiples of $T$,
fixed generation time $C$, and a condition available at time $t_{\mathrm{arrival}}$.
The new condition is unknown before $t_{\mathrm{arrival}}$, and a generation job fixes its condition at
launch. We exclude in-flight condition injection or partial DiT sampling reuse
that could shorten the new condition's $C$-second generation. We consider an isolated update that is not
subsequently superseded, with no added transport or client-buffer delay.
For attainability of a steady single-worker schedule, take $C\leq T$ and
assume the required prefix is available when that schedule starts a job.

\begin{proposition}
For uniform condition-arrival phase modulo $T$, the fixed-boundary model
has $\E[L]\geq C+T/2$. This lower bound is attainable under the stated
ideal scheduling assumptions.
\end{proposition}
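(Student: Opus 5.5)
The plan is to prove the two parts separately: a pathwise lower bound on the playback time, averaged over the uniform phase, and then an explicit schedule that meets it.

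\textbf{Lower bound.} Write the arrival time as $t_{\mathrm{arrival}}=mT+\phi$, with $m$ an integer and phase $\phi$ uniform on $[0,T)$. By the stated assumptions, the condition is unknown before $t_{\mathrm{arrival}}$ and a job fixes its condition at launch. Any job producing newly conditioned output therefore satisfies $t_{\mathrm{start}}\geq t_{\mathrm{arrival}}$. Because generation time is fixed and in-flight injection is excluded, $t_{\mathrm{ready}}=t_{\mathrm{start}}+C\geq t_{\mathrm{arrival}}+C$. Newly conditioned output can enter playback only at a boundary in $T\mathbb{Z}$ no earlier than $t_{\mathrm{ready}}$. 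Hence $t_{\mathrm{playback}}\geq T\lceil (t_{\mathrm{arrival}}+C)/T\rceil$, which is the first relation in (\ref{eq:chunk-bound}). It follows that
\[
L=t_{\mathrm{playback}}-t_{\mathrm{arrival}}\geq C+D,\qquad D:=T\left\lceil\frac{t_{\mathrm{arrival}}+C}{T}\right\rceil-(t_{\mathrm{arrival}}+C)\in[0,T).
\]
The key observation is that $D$ depends only on $(\phi+C)\bmod T$. Since $\phi$ is uniform on $[0,T)$, the shifted phase $(\phi+C)\bmod T$ is also uniform on $[0,T)$. Therefore $D=T-((\phi+C)\bmod T)$, except on a null set where $D=0$, and $D$ is uniform on $(0,T]$ up to that null set. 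This gives $\E[D]=T/2$ and hence $\E[L]\geq C+T/2$. Since $C$ is fixed, we also get $\E[R+Q]\geq T/2$.

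\textbf{Attainability.} We exhibit an ideal single-worker schedule that achieves equality pointwise. Let the worker be idle at the arrival (isolated update), and launch the job immediately, so $R=0$. By assumption the required prefix is available, so the job finishes at $t_{\mathrm{arrival}}+C$, and its output enters playback at the next boundary, giving $Q=D$. With no transport delay, $L=C+D$ for every phase, and taking expectations yields equality.

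\textbf{Main obstacle.} The delicate point is consistency with the steady periodic schedule. We must check that an immediate launch does not conflict with the regular refill job, which occupies a $C$-second slot every $T$ seconds. If it did conflict, $R>0$ would be forced, as in (\ref{eq:residual}), and the bound would not be attained. To handle this, we use the freedom in the ideal-schedule assumptions. First, place each regular refill job so that it ends exactly at a boundary. Second, let the update job replace the refill for the boundary it targets. We then verify that the worker is free at $t_{\mathrm{arrival}}$. This holds because the update job, launched at $t_{\mathrm{arrival}}$, itself supplies the next boundary's content and finishes no later than that boundary. It can therefore be rescheduled as that boundary's refill, using $C\leq T$ and the assumption that the prefix is available at launch. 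If an arrival falls inside an in-flight refill, that refill can be abandoned, since it targets a boundary the update will supersede. Making this preemption argument rigorous while preserving playback continuity is the step I expect to need the most care.
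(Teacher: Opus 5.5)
Your lower bound is correct and matches the paper's argument. The result cannot be ready before $t_{\mathrm{arrival}}+C$, and because $C$ is fixed, the boundary gap $D=T-\bigl((t_{\mathrm{arrival}}+C)\bmod T\bigr)$ is uniform with mean $T/2$. The gap is in the attainability part.

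In any steady single-worker schedule with one $C$-second job per period $T$, a uniformly phased arrival finds the worker busy with probability $C/T$. Immediate launch therefore cannot cover every arrival, and the preemption you propose fails. With refills ending exactly at boundaries, as you place them, the job in flight on $[nT-C,nT)$ produces the chunk displayed at $nT$. An update arriving in $(nT-C,nT)$ is ready only after $nT$, so it can target no boundary earlier than $(n+1)T$. It therefore never supersedes the in-flight refill, and your claim that the refill ``targets a boundary the update will supersede'' is false for every such arrival. Abandoning that refill leaves nothing to play at $nT$, which is a stall. With your placement, this preemption is never valid.

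The misconception behind the detour is your statement that a conflict would force $R>0$ and so prevent attainment. The bound constrains $R+Q$ jointly, so moving waiting from $Q$ into $R$ leaves $L$ unchanged once $t_{\mathrm{playback}}$ is fixed. The paper makes exactly this point after its proof, warning that adding the residual wait of \eqref{eq:residual} on top of $T/2$ double-counts it.

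The fix for an arrival in $(nT-C,nT)$ is simple. Let the refill finish and launch the update at $nT$, or at $(n+1)T-C$. Since $C\le T$, it is ready by $(n+1)T=T\lceil(t_{\mathrm{arrival}}+C)/T\rceil$, so $L=C+D$ still holds pointwise.

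The paper's construction avoids the case split entirely. All jobs start at $nT-C$ and finish at $nT$, and each uses the latest condition at launch. Then $Q=0$, and $R$ is the forward wait to the next start, with mean $T/2$, giving $\E[L]=C+T/2$. Your immediate-launch schedule and the paper's are the two extreme ways of splitting the same total wait between $R$ and $Q$. Either works once you drop the requirement $R=0$.
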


\begin{proof}
The new result cannot finish earlier than $t_{\mathrm{arrival}}+C$. Its earliest allowed
playback boundary is therefore
$t_{\mathrm{playback}}^{\min}=T\lceil(t_{\mathrm{arrival}}+C)/T\rceil$.
The phase $U=(t_{\mathrm{arrival}}+C)\bmod T$ is uniform on $[0,T)$ because $C$ is fixed.
Ignoring the zero-measure case of exact alignment,
$t_{\mathrm{playback}}^{\min}-(t_{\mathrm{arrival}}+C)=T-U$,
whose expectation is $T/2$.
Thus $\E[t_{\mathrm{playback}}-t_{\mathrm{arrival}}]\geq C+T/2$.
For attainability, start generation at $nT-C$ and finish at $nT$.
An update is used by the next generation start after its arrival.
Those starts are separated by $T$, so their mean forward waiting time
under a uniform arrival phase is $T/2$. Generation takes $C$ and its
completion is aligned with playback.
\end{proof}

For any realized start $t_{\mathrm{start}}$ and display $t_{\mathrm{playback}}$,
the two waiting terms obey
\begin{equation}
 \begin{aligned}
 &(t_{\mathrm{start}}-t_{\mathrm{arrival}})
   +[t_{\mathrm{playback}}-(t_{\mathrm{start}}+C)]\\
 &\qquad=t_{\mathrm{playback}}-t_{\mathrm{arrival}}-C.
 \end{aligned}
\end{equation}
Changing $t_{\mathrm{start}}$ alone redistributes waiting without changing the total for a
fixed $t_{\mathrm{playback}}$. The $T/2$ in the proposition already accounts for the combined
waiting in the ideal baseline; adding a second copy of residual generation
waiting would double-count it. Additional dependencies can make an actual
baseline slower, but they must be modeled explicitly.

\subsection{Latent-boundary alignment}

If the prefix is immediately available, there are no extra buffer constraints,
and generation takes fixed time $C$, the earliest latent-aligned display is
$t_{\mathrm{playback}}=\delta\lceil(t_{\mathrm{arrival}}+C)/\delta\rceil$.
Uniform phase modulo $\delta$ gives
$\E[L]=C+\delta/2$ in this ideal case. With nonzero $R$, however, the
relevant phase is $(t_{\mathrm{arrival}}+R+C)\bmod\delta$.
Dependence between $t_{\mathrm{arrival}}$ and $R$
can concentrate completion times at a few phases. Consequently, uniform
arrivals do not by themselves imply a uniform completion phase.

\subsection{Residual generation under an isolated update}

Assume the no-update system runs one indivisible job of length $C$ in each
period $T$, with $C\leq T$. A uniformly arriving isolated update lands
inside generation with probability $C/T$. Suppose that job must complete
to supply the prefix required for the new branch. Conditional on arriving
during it, the residual duration is uniform on $[0,C]$. When generation
is idle, the prefix and resources needed for the new branch are assumed
ready. Then
\begin{equation}
  \E[R]
  =\Pr(\mathrm{busy})\E[R\mid\mathrm{busy}]
  =\frac{C}{T}\frac{C}{2}
  =\frac{C^2}{2T}.
\end{equation}
Adding the new generation time and actual alignment gives
$\E[L]=C+C^2/(2T)+\E[Q]$.
For the illustrative values $T=5$\,s and $C=1.6$\,s, the residual mean is
$0.256$\,s. Excluding $Q$, an idle arrival responds in $1.6$\,s, a busy
arrival averages $2.4$\,s, and the unconditional average is $1.856$\,s.
These numbers are arithmetic examples, not video-system measurements.

The derivation intentionally uses the same $C$ for the blocking job and
the new branch's time to deliverability. A real pipeline may separate
DiT sampling, KV completion, VAE decoding, and transmission, or overlap some of
these stages. In that case each critical path must be recorded and the
blocking model rederived. Repeated updates can also change the busy
fraction from $C/T$, so this expression is not a high-frequency law.

An instructive limiting case is $C=T=K\delta$. The generation worker is always busy
in the reference schedule, so $\E[R]=T/2$, and its new completion is
latent-aligned with $Q=0$. The resulting mean is $3T/2$, equal to the
ideal fixed-chunk baseline. Finer delivery alone therefore need not improve
the mean when generation consumes the entire period; the available compute
headroom and prefix dependence matter.

\section{Prediction-use accounting}
\label{app:cost}

\subsection{DiT sampling-call accounting}

Let $N$ equal-cost complete DiT sampling calls each produce $K$ latents, and let $U$
of those latents ultimately be played. If $w=1-U/(NK)$, the number of
DiT sampling calls per played latent, relative to full-use chunk generation,
is
\begin{equation}
  A=\frac{N/U}{1/K}=\frac{NK}{U}=\frac{1}{1-w}.
\end{equation}
This equality is exact for the counted DiT sampling work under the stated
equal-cost, complete-call accounting. Interpreting $A$ as an end-to-end cost
ratio is only approximate and requires separate accounting of VAE decoding,
conditioning preparation, cache rebuilding, aborted jobs, and other
overheads. Finite-run initialization and unused final buffers must be treated
consistently between methods.

For an overlap-trimmed adapter, $K$ remains the number of sampled latents,
not the number of newly playable groups. Use the matched adapter's
full-use output yield as the baseline, and report mandatory overlap work
separately from predictions discarded because of updates. Nonuniform group
durations additionally require accounting in played frames or seconds.
The formula above cannot be applied by counting re-generated conditioning
frames as newly played output.

\subsection{A separate independent-update model}

For intuition, consider a discrete model in which every $\delta$-period
decision independently triggers an update with probability $p$.
Each DiT sampling call provides $K$ completed predictions, its head is always used,
and generation is assumed ready at the necessary decision without
cross-period prefix blocking. Let $M$ be the number used before the next
update or exhaustion. For $m=1,\ldots,K$,
$\Pr(M\geq m)=(1-p)^{m-1}$. The tail-sum identity yields
\begin{equation}
  \E[M]=\sum_{m=1}^{K}(1-p)^{m-1}
  =\frac{1-(1-p)^K}{p},\qquad
  A=\frac{K}{\E[M]}
  =\frac{Kp}{1-(1-p)^K}.
\end{equation}
The continuous extension at $p=0$ is $A=1$, and $p=1$ gives $A=K$.
If Poisson arrivals of rate $\lambda$ are coalesced within each interval,
$p=1-\exp(-\lambda\delta)$. For $\lambda T\ll1$,
\begin{equation}
  A-1 \simeq \frac{\lambda\delta(K-1)}{2}
  =\frac{\lambda(T-\delta)}{2}.
\end{equation}

This model is not the wall-clock scheduler of Section~\ref{sec:method}.
With finite generation time, a locked prefix can contain several old predictions,
updates during a job can be coalesced, and even a new call's head need not
have the same usage distribution as in the independent model. The measured
$M$ distribution must therefore replace the geometric model for actual
high-frequency workloads. If a DiT sampling call occupies its worker for
$C_{\mathrm{DiT}}$, $\rho=AC_{\mathrm{DiT}}/T<1$ provides average DiT worker
capacity under the same accounting assumptions; it does not guarantee that
all deadlines are met.

\section{Extended finite-state controller specification}
\label{app:algorithm}

This specification formalizes the control logic of Section~\ref{sec:fsm} as
an extended FSM whose two control states describe worker activity and whose
registers carry the remaining system state. The finite control set is
$\mathcal S=\{\mathsf{Produce},\mathsf{Consume}\}$, with invariant
\begin{equation}
 z=\mathsf{Produce}\ \Longleftrightarrow\ J\neq\varnothing.
 \label{eq:fsm-state}
\end{equation}

\paragraph{Registers and events.}
The register tuple is
$x=(B,H,V,v,u,J,p,\ell_i,\ell_l,n,\beta)$.
$B$ stores versioned predictions and their latent/pixel readiness;
$H$ stores conditioning inputs and model-specific state, including prefix
KV when used; $V$ stores VAE state and the source-to-playback mapping.
$v$ is the latest accepted
condition version and $u$ a pending version or $\varnothing$.
An active $J$ records a unique job identifier, fixed condition version,
branch position, and readiness/deadline estimates.
$p$ is the playback cursor; $\ell_i$ and $\ell_l$ are exclusive ends of
the irrevocable and compute-locked prefixes; $n$ is the end of valid
predictions. The playback plan $\beta$ assigns deadline $b_j$ to position
$j$. A monotone event sequence records the processing order, including
events sharing a timestamp.

The normal-operation event alphabet is
\begin{equation}
 \mathcal E=\{\mathsf{update},\mathsf{ready},\mathsf{finish},
              \mathsf{tick},\mathsf{wake}\}.
\end{equation}
Events carry condition versions, job identifiers, positions, and timestamps
as appropriate. Only valid callbacks from the corresponding job may publish
results; stale callbacks cannot reinstall an invalidated suffix.

\paragraph{Feasible boundaries and dispatch guard.}
Let $\ell=\max(\ell_i,\ell_l)$. For a candidate branch $j$, define
\begin{equation}
 \begin{split}
 F_j(x,t)={}&[j\geq\ell]\ \land\operatorname{ContextReady}(x,j)
 \land\operatorname{Cover}(x,t,b_j)\\
 &{}\land\operatorname{ValidVAE}(x,j)
 \land\bigwedge_{m=0}^{M_j-1}
       [t+\widehat C_{j,m}+\epsilon\leq b_{j+m}].
 \end{split}
 \label{eq:fsm-feasible}
\end{equation}
$\operatorname{ContextReady}$ requires the inputs needed by the selected
adapter $\Phi_\theta$. For retained-prefix KV continuation, these include
the prefix latents and their KV; playable pixels alone are insufficient.
For boundary-image continuation, the retained boundary RGB frame suffices
as the native image encoder's input; full-prefix KV is not required.
$\operatorname{Cover}$ requires that the old prefix can meet all remaining
playback deadlines before $b_j$; $\operatorname{ValidVAE}$ requires a
continuation that preserves already committed pixels.
$M_j$ is the number of newly playable groups after the adapter's overlap
trimming and must be positive for dispatch; $M_j=K$ for the untrimmed
uniform-latent model.
$\widehat C_{j,m}$ estimates time from launch until new group $m$ becomes
deliverable, and $\epsilon$ is a configured safety margin. Deadlines
$b_{j+m}$ come from the actual output mapping, including nonuniform native
group durations where needed.
The feasible set $\mathcal F(x,t)$ contains positions with $F_j(x,t)$ true.
For a pending update, it includes an append boundary when that boundary
is feasible.

Let $h(x,t)$ be remaining continuous playable coverage and
$\eta(x,t)>0$ the configured refill lead, including alignment and margin.
Define $\operatorname{Refill}(x,t)=[h(x,t)\leq\eta(x,t)]$ and
\begin{equation}
 \begin{split}
 g(x,t)=[J=\varnothing]\ \land\bigl(
 &[u\neq\varnothing\land\mathcal F(x,t)\neq\varnothing]\\
 {}\lor{}&[u=\varnothing\land\operatorname{Refill}(x,t)\land F_n(x,t)]
 \bigr).
 \end{split}
 \label{eq:fsm-guard}
\end{equation}
With a pending update, choose the earliest feasible $j$ by playback deadline;
otherwise append at $n$. Hence regular refill and update-triggered
replacement are distinct reasons to launch, neither tied to every delivery.
A nonempty pending version alone does not authorize a launch.

\paragraph{Event actions and transition semantics.}
For each event, apply the corresponding register action $U_e$ below, then
evaluate $g$ and Equation~\eqref{eq:fsm} once. This order ensures that a
completion's outputs are validated before a dependent branch is launched.
\begin{enumerate}
  \item $\mathsf{update}$ sets $v$ and $u$ to the newly accepted version,
  recording any replaced pending version as superseded. It leaves $J$ and
  the protected prefix unchanged. An active job therefore yields a
  $\mathsf{Produce}$ self-loop.
  \item $\mathsf{ready}$ validates and publishes the indicated latent,
  KV, or pixel readiness in $B,H,V$ without releasing $J$.
  Pixel delivery may begin at its scheduled boundary during
  $\mathsf{Produce}$; no unready pixel group is marked deliverable.
  \item $\mathsf{finish}$ validates remaining results and releases the
  active scheduling slot by clearing $J$. The resulting transition is
  $\mathsf{Produce}\rightarrow\mathsf{Produce}$ when $g$ authorizes another
  job, and $\mathsf{Produce}\rightarrow\mathsf{Consume}$ otherwise.
  Pending versions survive completion until actually dispatched.
  \item $\mathsf{tick}$ delivers the next valid, ready group and advances
  $p$ in either control state. Missing output records an underrun rather
  than a successful display. Delivery updates commitment positions when
  downstream transfer is irrevocable. It invokes no DiT sampling itself;
  a coincident refill or update demand can authorize a separate launch.
  \item $\mathsf{wake}$ refreshes time-dependent guards, including refill
  lead and retry conditions. From $\mathsf{Consume}$, a true guard gives
  $\mathsf{Consume}\rightarrow\mathsf{Produce}$; a false guard leaves
  $\mathsf{Consume}\rightarrow\mathsf{Consume}$ with pending data retained.
\end{enumerate}

$\operatorname{Launch}$ atomically locks the retained prefix before $j$,
invalidates only the replaceable suffix and dependent speculative state,
and records a new $J$ for a complete $K$-latent DiT sampling call with
the latest condition. It clears only the pending version it consumes.
VAE decoding supplies the required pixels for each group's deadline.
In-flight conditioning is fixed; later arrivals populate $u$, not $J$.
Initialization uses a separate bootstrap launch before the playback clock
starts, establishing initial coverage and VAE state. Startup latency is
recorded separately from steady-state response.

\paragraph{Required invariants.}
No delivered or locked prefix is overwritten. A new branch never directly
attends to invalid suffix entries. At most one job occupies the active slot,
and only a guarded launch starts new DiT sampling.
Playback and cache positions remain consistent after a
branch. No predicted latent is counted as displayed before a display event.
VAE decoder invalidation cannot alter already delivered pixel frames. Missed
deadlines, superseded updates, and additional work are recorded in the logs.

\paragraph{Runtime estimates and failure handling.}
The implementation estimates $C$ at runtime. The experimental configuration
specifies the estimator, safety margin, retry policy, and any fallback that
freezes playback, drops frames, uses stale content, or alters the target
boundary. Quality and response measurements include these fallback outcomes.

Job failure publishes no invalid outputs and releases the scheduling slot
only when the worker confirms termination. The declared fallback handles any
resulting coverage loss.
Both states may receive playback ticks, while their effect on the control
state is determined by remaining work and dispatch demand.

\section{Event and measurement schema}
\label{app:logging}

Generation is the computation needed to produce deliverable video; its
general definition does not require a VAE. For the latent diffusion models
considered here, $C$ includes conditioning preparation, DiT sampling, and
required VAE decoding on the critical path to the first output group,
not necessarily the entire chunk. Overlapping stages are not simply added.
$R$ may include scheduling, resource, and conditioning-input readiness
waits; $Q$ may include transport and buffering as well as playback alignment.
User-end-to-end latency additionally includes processing before
$t_{\mathrm{arrival}}$. Semantic response, when the condition's effect is
visibly recognizable, is a separate measure.
The relation $T=Kr/f$ assumes a uniform latent-to-frame mapping; models
with nonuniform mappings or overlap trimming use their actual output
frame counts. The temporal frame rate $f$ is not computational throughput
in generated frames per wall-clock second.

Each event log should include a run identifier, event identifier, condition
version, condition-ready time, prefix-ready time, generation start, KV-ready
time, DiT sampling completion, VAE decoding completion, first deliverable
output time, target switch boundary, actual display time, and final outcome.
Generation logs additionally record call
identifier, absolute positions, used and discarded predictions, runtime,
GPU count, and any canceled or repeated work. Model, DiT sampling, resolution,
frame-rate, and buffer configurations belong in run metadata. Record the
conditioning adapter, retained-context identity, source-to-playback offsets,
and trimmed overlap, so native image continuation is not conflated with
retained-prefix KV reuse.

Use identical external event streams for paired policy comparisons. Report
the randomization of wall-clock arrival phases and do not restrict events to
convenient generation boundaries. Repeated-run uncertainty should be
estimated at the run or paired-run level when events within a rollout are
dependent. Record the warm-up and final-buffer accounting policy.

\section{Conditioning and output adapters}
\label{app:adaptation}

The controller operates on retained content and playback deadlines.
Its conditioning state $\mathcal H_j$ need not have the same representation
across generators. This section distinguishes the adapters used to expose
continuation, rather than treating all streaming paths as full-prefix KV
reuse.

\paragraph{Retained-prefix KV.}
For a generator with history-conditioned DiT sampling, the adapter exposes
the selected prefix's inference state at the correct positions.
Direct suffix entries are invalidated while the retained KV is reused.
This preserves the original inference representations; it does not
reconstruct them from clean latents. An implementation that instead builds
KV by clean prefill of each retained latent is a different cache policy.
An LTX proposal-reuse implementation may instead use incremental clean
prefill; that should be identified separately from direct retained-KV reuse.
Neither policy can erase old-future information already present in fixed
prefix latent values. Separating the additional effect of KV reuse requires
a matched controlled ablation.

\paragraph{Boundary-image continuation.}
The bidirectional image-conditioned path retrieves the full-resolution
RGB frame immediately preceding the selected branch from the retained
timeline, then submits it through the checkpoint's native image
conditioning interface with the latest available prompt.
The H3 image-conditioned path can implement this operation without a new
temporal causal mask or wrapper-specific training.
This path conditions on the boundary image rather than full-prefix KV or
equivalent long-range context.

\paragraph{Latent and multi-frame variants.}
The H3 experimental latent-input path replaces image encoding with a
normalized clean video latent while retaining condition packing and
condition-noise handling. This differs from copying the latent into the target
noise tensor, and its conditioning distribution differs from that of a single
encoded image. A separate multi-frame path supplies a
phase-aligned group of retained latents as keyframe conditions and removes
the corresponding output overlap. These variants change context, eligible
boundaries, and preparation costs, and are reported separately from the
boundary-image adapter.

\paragraph{Output mapping and VAE state.}
A native clip may include a regenerated anchor or context span. The output
adapter omits that span and maps only the remaining pixels to the new
playback interval. It retains source indices for subsequent anchors, and
never selects history from a discarded or trimmed-away prediction.
This mapping can have nonuniform latent-to-frame spans and fewer playable
groups than sampled latents; the controller uses their actual deadlines.

Streaming VAE decoding can publish finalized groups before the entire
clip has finished decoding, while DiT sampling remains joint over the
chunk. This does not imply that a prefix of bidirectional DiT sampling is
available early. VAE overlap and any state dependent on a replaced suffix
must be handled before pixels are committed; the adapter may instead
decode the full clip and deliver its completed frames incrementally.
Group-ready and job-finish events remain separate in both cases.
History encoding, cache construction, VAE decoding, and mandatory overlap
work are included in the corresponding timing and cost measurements.

\section{Future work: co-optimizing DiT sampling and VAE decoding}
\label{app:future}

Accelerating joint DiT sampling is one route to retaining large chunks,
but the serving objective is to reduce generation time $C$ to the
first deliverable output group. A natural next step is to co-design DiT
sampling and VAE decoding as a single generation pipeline rather than
optimize their isolated runtimes. This includes when completed latent groups
are handed to the VAE decoder, how decoding is scheduled against playback
deadlines, and how compute and memory are allocated between stages.

Overlap schedules must respect the data dependencies of temporally
bidirectional DiT sampling and VAE decoding. Candidate schedules should measure
time to the first deliverable group, later-group deadlines, total GPU cost,
and playback stalls. Parallel acceleration should be evaluated within the
complete pipeline and resource budget.

\end{document}